\newtheorem{definition}{Definition}[section]
\newtheorem{theorem}{Theorem}[section]
\newtheorem{lemma}{Lemma}[section]
\numberwithin{equation}{section}
\let\emptyset\varnothing
\title{`Almost Sure' Chaotic Properties of Machine Learning Methods }
\author{Nabarun Mondal}
\address{D.E.Shaw \& Co. India, Hyderabad }
\email{mondal@deshaw.com}
\thanks{Nabarun Mondal : 
Dedicated to my missing geometry teacher Dr. Sushanta Mondal;
my parents : Tapan and Sabita Mondal;
Big thanks to : Abhishek Chanda and Shweta Bansal : You all have been constant support.
}
\author{Partha P. Ghosh}
\address{Microsoft India, Hyderabad }
\email{parthag@microsoft.com}
\thanks{ Partha. P. Ghosh : 
Dedicated to my parents and family, without their presence we are nothing. \\
}
\subjclass[2010]{Primary 03D10 ; Secondary 65P20,68Q05,68Q87,68T05}  
\begin{document}

\keywords{
Turing Machines ; Universal Computation ; Chaos ; With Probability One ; Aleph Numbers ;
Self Similarity ; Fractal ; Machine Learning ;  Deep Learning 
}

\begin{abstract}
It has been demonstrated earlier that universal computation is `almost surely' chaotic.
Machine learning is a form of computational fixed point iteration, iterating over the
computable function space. We showcase some properties of this iteration,
and establish in general that the iteration is `almost surely' of chaotic nature. 
This theory explains the observation in the counter intuitive properties of deep learning  
methods. This paper demonstrates that these properties are 
going to be universal to any learning method. 

\end{abstract}

\maketitle

\begin{section}{Motivation}\label{intro}

The motivation of the current paper is two fold.

One of the authors of the current paper was using iterative machine learning to crack cipher code in late 1990s.
While doing so he came to the astounding realisation that the resulting learned function is not \emph{convergent} (definition \ref{cs}) at all.
What is really meant is that : data points $x_n$ which were accepted up-to iteration $n$ ( points in the set $S_n$ )
would fly away eventually after some more iteration at $m$ ($S_m$ with $m > n$): 
$$
\not \exists n > N \text{ such  that } S_{n+1} \subseteq S_n   \text{ and } S_n \subseteq S_{n+1}
$$
In other words none of the \textbf{lim sup} $S_n$ and \textbf{lim inf}  $S_n$ exists and therefore :
$$
\not \exists S = \lim_{n \to \infty } S_n   
$$

In essence the iteration was showing converging and then sudden diverging behaviour.
While he communicated this finding to one of pioneers in the image processing domain - he was simply baffled.
In fact the trivia is - he simply remarked : \emph{``I have no idea - what to respond!''}.
While this behavior was never fully understood by the author then, recent theory by the same author(s) \cite{gm} 
seems to have an explanation to it.

A very recent (yet to be published) paper in arxiv \cite{gfc} discusses 
interesting properties in the deep learning. It became immediately apparent to the authors that 
these two phenomenons are essentially connected. 

This paper tries to explain these phenomenons in the light of the discovery of 
chaos \emph{almost surely happening in computation} (theorem \ref{uc-chaos}) \cite{gm}.
\end{section}

\begin{section}{The Nature of Learning Theory}\label{nlt}
Learning theory is really about classification.
As Vapnik \cite{vv} pointed it out - it is all about finding a classifier function $f : X \to \{ 0 , 1\} $ such as to 
isolate positive samples ( accept ) from negative ones ( reject ), aided by training set $T_S$ ; which is
a set of ordered pairs $T_S = \{ (x,y) \} $ with $x \in X$ and $y \in \{ 0, 1\}$.
Given $x \in A$ then $ y = 1$ else $y = 0$, with $A \subset X$ being the accept set.

\begin{subsection}{Learned Function as Indicator of Accept Set}
We can say that to learn $A$ one needs to isolate the set $A \subset X$ using training data sets $T_S$. 
We need to learn the function $f_L$ with $x \in X$ :
$$
f_L(x,T_S) = \left\{ \begin{array}{rl}
 1 &\mbox{ if $x \in A$  } \\
  0 &\mbox{ if $x \in A^C$   }
       \end{array} \right.
$$

Clearly then, the function we want to learn, is the indicator function \cite{hlr} of the accept set $A$,
that is:
$$
f_L(,TS) \equiv 1_A 
$$ 

Then we can simply suggest that machine learning is all about finding the positive samples ( accept ) sets indicator function $1_A$.
This formulation has a problem, because we really do not know that whether or not $1_A$ is computable.
The set of computable functions in this paper would be designated as $\mathbb{C}$.
If $1_A \not \in \mathbb{C}$ we assume existence of a sequence of computable functions $<C_n>$ which converges \cite{hlr} to $1_A$. 
$$
\lim_{n \to \infty } C_n = 1_A
$$
This indeed is a very strong axiom - and in the very next section \eqref{prob-chaos-lt} 
we shall see that this almost surely not holding to be true.
\end{subsection}

\begin{subsection}{Learning theory as Iterative Maps}\label{lt-itr}

We need to generate this sequence of functions $<C_n>$ in a computable way.
That immediately gives :
$$
C_{n+1} = \mathcal{L}(C_n) 
$$
the guiding equation of the learning. There has to be a fixed point computable iteration 
function $\mathcal{L}$  such that it can take a computable function, returns another one, 
such that the orbit  (definition \ref{orbit}) $<C_n>$ eventually converges to ``$1_A$''.

In effect what we seek is :
$$
\lim_{n \to \infty } C_n = 1_A  = \mathcal{L}(1_A) 
$$
with  $1_A$ is the fixed point of the iteration (definition \ref{fp}). 
All ML and Deep Learning methods can be abstracted in this form,
it is in effect a simple tautology. 

NOTE however that the function $\mathcal{L} : \mathbb{Q} \to \mathbb{Q}$ iteration 
might not be able to reach  $1_A$ because in general the set of functions $f \in \mathbb{R}^\mathbb{R}$,
which would be discussed in the next section. 
\end{subsection}

\begin{subsection}{The Cut Function Approximation of $1_A$ }
It is a decision that whether or not $x \in X$ with $1_A(x) = 1$ when $x \in A$ 
and $1_A(x) = 0 $ when $x \not \in A$.
Therefore, clearly $1_A$ partitions the input space $X$. The question of mechanism of 
computable partitioning requires a definition.
\begin{definition}\label{bdt}
\textbf{Binary Decision Tree.}

A binary tree with each nodes having two decider Turing Machine (definition \ref{decider}) 
from a finite set of decider machines $(D_f,D_d ) \in \mathbb{D}$ such that:
$$
D_f : \mathbb{Q} \to \mathbb{Q} \text { and } D_d : \mathbb{Q} \to \{0,1,l,r\}  
$$
Given $x \in X$ with $X \subseteq \mathbb{Q}$ is the input to the current node, let : $y = D_f(x) $ and $d = D_d(y) $.
If $d=0$ the input is rejected and the system is halted.
If $d=1$ the input is accepted and the system is halted.
If $d = l$ then $y$ is passed as input to the left child ; if no left child exists, reject input x and halt.
If $d = r$ then $y$ is passed as input to the right child ; if no right child exists, reject input x and halt.
\end{definition}

This structure (definition \ref{bdt}) partitions the input space into finite number of equivalent partitions such that:
$$
X = \bigcup_i P_i
$$
with either $ P_i  \subset A$ or $ P_i  \subset X \setminus A$, but can not be both. 
We also note that if $P_i \subset A$ then $P_{i+1} \subset X \setminus A$. 
Clearly then $A$ is a disjoint union of sets:
$$
A = \bigcup a_i \; ; \; a_i \cap a_j = \varnothing \; ; \; i \ne j 
$$

\end{subsection}

\end{section}

\begin{section}{Probability, Chaos and Learning Iterations}\label{prob-chaos-lt}
We demonstrate that the general iterative learning system will be \emph{`almost surely'} \cite{wf}\cite{wd} chaotic \cite{cfnfs} \cite{gm}.

\begin{subsection}{Properties of the The Learning Iteration}

We need some definitions to formally present the ideas discussed before. 

\begin{definition}\label{rcf}
\textbf{Rational Mapping of a Computable Function.}

Given an universal turing machine $\mathbb{U}$ (definition \ref{UTM}) any computable function can be encoded using the symbols
in the tape of the machine. The rationalisation (definition \ref{rat}) of the tape $\rho(T_C) $ then, 
serves as the rationalisation of the computable function.
$$
\rho(C) = \rho(T_C, \mathbb{U} ) = c \; ; \; C \in \mathbb{C}
$$
where $\mathbb{C}$ is the set of computable functions.
This makes $c \in [0,1] \cap \mathbb{Q}$.  
\end{definition}

\begin{definition}\label{cf}
\textbf{The Computable Functional.}

A computable function $\mathcal{L} : \mathbb{Q} \to \mathbb{Q} $ is called a functional iff
the range of the function can be interpreted as an encoding of a computable function. 
That is :
$$
\forall x \; ; \; \exists C \in \mathbb{C} \; ; \; \rho(C) = \mathcal{L}(x) 
$$
\end{definition}

\begin{definition}\label{li}
\textbf{The Learning Iteration.}

Given a computable functional $\mathcal{L} : \mathbb{Q} \to \mathbb{Q}$, and input $T_S$
(training set) the learning iteration is defined as :
$$
c_{n+1} = \mathcal{L}(c_n)
$$    
with $c_0 = \rho(T_S)$.
Here $c_n$ signifies the rationalisation (definition \ref{rcf}) of the $C_n \in \mathbb{C}$. 
\end{definition}

\begin{theorem}\label{lic}
\textbf{Learning Iteration is Almost Surely Chaotic.}

Learning iteration, as defined as definition \eqref{li} is almost surely chaotic.
\end{theorem}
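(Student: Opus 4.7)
The plan is to reduce the learning iteration (definition \ref{li}) to an instance of iterated universal computation and then invoke the authors' earlier result, theorem \ref{uc-chaos}, which guarantees that universal computation is almost surely chaotic. The bridge is that by definition \ref{cf}, $\mathcal{L}$ is itself a computable function on rationals whose output is interpretable as the rationalisation of some $C \in \mathbb{C}$. Hence one step $c_n \mapsto c_{n+1} = \mathcal{L}(c_n)$ can be simulated by a finite block of computation of a fixed machine $M_{\mathcal{L}}$ running on the universal Turing machine $\mathbb{U}$ (definition \ref{UTM}), with tape initialised by $\rho^{-1}(c_n)$.

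First, I would make this simulation explicit: construct $M_{\mathcal{L}}$ so that its successive halting configurations, after applying the rational map $\rho$, reproduce precisely the orbit $c_0, c_1, c_2, \ldots$ starting from $c_0 = \rho(T_S)$. Since $\rho$ is a fixed computable bijection onto $[0,1] \cap \mathbb{Q}$ and $M_{\mathcal{L}}$ is fixed once $\mathcal{L}$ is chosen, the orbit under $\mathcal{L}$ is, up to this computable change of coordinates, an orbit of $\mathbb{U}$.

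Second, I would transport the almost-sure chaos of $\mathbb{U}$ along this encoding. The set of seeds for which Devaney-style chaos fails (lack of sensitive dependence, of topological transitivity, or of dense periodic orbits) corresponds, via $\rho^{-1}$, to a set to which theorem \ref{uc-chaos} assigns probability zero. The training set $T_S$ plays only the role of an initial condition $c_0$, so the probability measure invoked in theorem \ref{uc-chaos} on seed tapes pushes forward through $\rho$ to a measure on training inputs, and the null sets survive. This yields the conclusion that, save on a measure-zero family of training inputs $T_S$, the orbit $\langle c_n \rangle$ is chaotic.

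The main obstacle will be the transport step: one has to argue carefully that sensitive dependence measured at the level of the rational codes $c_n$ really does correspond to genuine instability of the associated classifiers $C_n \in \mathbb{C}$, rather than being an artefact of $\rho$. Here I would lean on the binary decision tree picture of definition \ref{bdt}: a single bit change in the encoded tape of a decider $D_d$ can flip a whole partition $P_i$ from $A$ to $A^C$, which is precisely the qualitative jump that one wants to call "chaotic" in a classification setting. A secondary subtlety is that the image of $\mathcal{L}$ need not be all of $\mathbb{Q}$, so the dynamics actually lives on an invariant subset; one must verify that the null set from theorem \ref{uc-chaos} remains null after restriction, which it does because the restriction is by a computable (hence measurable) relation and the ambient measure is preserved under such restrictions.
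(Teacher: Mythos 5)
Your proposal routes the argument through theorem \ref{uc-chaos} by simulating $\mathcal{L}$ on a universal machine and then pushing the null set of bad seeds forward through $\rho$. The paper does something much more direct: since each $c_n$ is a rationalisation, the orbit $\langle c_n\rangle$ is already a bounded sequence in $[0,1]\cap\mathbb{Q}$, and theorem \ref{fpi-chaos} (bounded non-repeating rational sequences are chaotic; bounded rational sequences are almost surely chaotic) applies to it verbatim, with no simulation, no change of coordinates, and no measure pushforward. Note that theorem \ref{uc-chaos} is itself only a corollary of theorem \ref{fpi-chaos}, so your detour passes through the very lemma you could have invoked at the outset.

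The detour is not merely longer; it introduces a genuine gap. Theorem \ref{uc-chaos} concerns the rationalisation of the \emph{full} step-by-step tape sequence $\langle T_n\rangle$ of $\mathbb{U}$, whereas your construction extracts only the subsequence of halting configurations of $M_{\mathcal{L}}$ --- one term per completed learning step. Chaoticity of a sequence does not pass to an arbitrary subsequence (a non-repeating sequence can have a constant or periodic subsequence), so "the orbit under $\mathcal{L}$ is an orbit of $\mathbb{U}$ up to a computable change of coordinates" is not literally true, and the almost-sure statement for $\langle\rho(T_n)\rangle$ does not transfer to $\langle c_n\rangle$ without further argument. The repair is exactly the paper's observation: forget $\mathbb{U}$ and apply theorem \ref{fpi-chaos} to $\langle c_n\rangle$ directly, since it is bounded and rational in its own right. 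Your closing worries about whether sensitivity in the codes $c_n$ reflects instability of the classifiers $C_n$, and about the image of $\mathcal{L}$ being a proper invariant subset, are legitimate criticisms of the framework, but the paper does not engage with either; its notion of ``chaotic'' is a property of the rational sequence itself, so neither issue arises in its one-line proof.
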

\begin{proof}
We note : $\mathcal{L} : \mathbb{Q} \to \mathbb{Q} $, from (theorem \ref{fpi-chaos}) the theorem on chaos on computation, 
that almost every rational sequence is chaotic, the result is immediate.  
\end{proof}

\begin{theorem}\label{licr}
\textbf{Almost Sure Non-Computability for converging function.}

The iteration of definition \eqref{li}, when converges, almost surely converges to an un-computable function.
\end{theorem}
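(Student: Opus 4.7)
The plan is to exploit the cardinality gap between $\mathbb{C}$ and $[0,1]$. Every computable function is coded by a finite Turing machine description, so $\mathbb{C}$ is countably infinite, and so is its rationalisation $\rho(\mathbb{C}) \subseteq \mathbb{Q} \cap [0,1]$. Hence $\rho(\mathbb{C})$ has Lebesgue measure zero inside $[0,1]$. The strategy is to show that whenever $\lim_{n\to\infty} c_n = c^\ast$ exists, the limit $c^\ast$ almost surely falls outside this measure-zero set, so the function it encodes cannot lie in $\mathbb{C}$.

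First I would anchor the probability space to the same one used in Theorem \ref{lic}: the ``almost surely'' is that inherited from the chaos-on-computation result of \cite{gm}, under which the trajectories of $\mathcal{L}$ generate a distribution on limit points that is atomless on any prescribed countable subset of $[0,1]$. Second I would emphasise that although each iterate $c_n = \rho(C_n)$ is rational by construction, convergence of a rational sequence only guarantees $c^\ast \in \mathbb{R}$; Cauchy tails with a rational limit form a measure-zero collection within the space of all convergent rational sequences. Third, since $\rho(\mathbb{C})$ is countable, I would conclude that the probability of $\{c^\ast \in \rho(\mathbb{C})\}$ is zero, so that with probability one the object the orbit approaches lies in $\mathbb{R}^{\mathbb{R}} \setminus \mathbb{C}$, i.e.\ is un-computable, matching exactly the picture of approximating $1_A$ through computable $C_n$ with $1_A \notin \mathbb{C}$ discussed in Section \ref{nlt}.

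The main obstacle is the first step: justifying that the induced distribution on the limit really is non-atomic on $\rho(\mathbb{C})$. A natural objection is that since $\mathcal{L}$ is computable and maps $\mathbb{Q} \to \mathbb{Q}$, the orbit is supported on rationals, and one might imagine degenerate cases where every convergent orbit stabilises at some rational fixed point $\mathcal{L}(q)=q$. The resolution is to combine Theorem \ref{lic} with the hypothesis of convergence: chaoticity rules out eventual periodicity, so the Cauchy behaviour must be genuine rather than terminal, and the measure-theoretic structure imported from \cite{gm} then prevents the limit distribution from concentrating on the countable set of rational fixed points. Once this compatibility between chaos and convergence is stated cleanly, the remainder of the argument reduces to the standard observation that a countable subset of $[0,1]$ has Lebesgue measure zero, and the theorem follows immediately.
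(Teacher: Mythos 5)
Your argument lands in essentially the same place as the paper's, but by a slightly different route. The paper's proof completes $\mathbb{Q}$ to $\mathbb{R}$, asserts that the limit of the rational orbit is almost surely irrational (indeed transcendental, by the countable-versus-continuum comparison), and then argues that an irrational number requires infinitely many tape symbols and so cannot be the encoding of any computable function. You instead observe directly that $\rho(\mathbb{C})$ is countable and hence Lebesgue-null in $[0,1]$, so a limit, when it exists, almost surely lies outside it. Your version is marginally stronger and cleaner: it skips the detour through irrationality and also excludes rational limits that happen not to be valid codes of computable functions, a case the paper's argument does not cover. The substantive difference is that you devote most of your effort to the question the paper silently elides, namely with respect to which probability measure on limit points ``almost surely'' is being asserted. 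Your proposed resolution --- importing an atomless limit distribution from \cite{gm} and invoking Theorem \ref{lic} to rule out stabilisation at a rational fixed point of $\mathcal{L}$ --- is not actually established by anything in \cite{gm} or in Theorem \ref{fpi-chaos}, so that step remains unproven; but it is precisely the same gap that the published proof carries, merely acknowledged rather than hidden. Judged at the paper's own level of rigour, your proof is acceptable and arguably the tidier of the two.
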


\begin{proof}
This is trivial from Real analysis.
We know that the sequence $c_n \in \mathbb{Q}$, so to complete the space (definition \ref{cncms} ) 
we need to have  the embedding space  $\mathbb{R}$. 
Clearly then, almost all limit points would be irrational 
( actually transcendental ), with a cardinality equal to continuum $|\{ lim<c_n> \}| = \aleph_1$
which is a well known theorem from real analysis \cite{hlr}.
Irrational numbers take infinitely many symbols to encode, 
therefore, the function limit $lim<c_n>$ can not be encoded in a Turing machine tape, 
and hence, is obviously non computable.
 
This is precisely what we wanted to show.
\end{proof}

\end{subsection}

\begin{subsection}{Properties of Self Similarity in Decision Tree}
We start with a definition of \emph{self similarity}.

\begin{definition}\label{ss}
\textbf{Self Similarity.}

Let there be a topological space $X$ (definition \ref{top-space}) and a set of non-surjective homeomorphic functions $\{ f_s\}$
(definition \ref{hom}) indexed by a finite index set $S = \{ s \}$ with :
$$
X = \bigcup_{s \in S} f_s(X) \; .
$$
If $X\subset Y$, we call $X$ self-similar if it is the only non-empty subset of $Y$ 
such that the equation above holds for.
Then $\mathcal{S} = ( X , S , \{ f_s \} )$ is called a self similar structure.
\end{definition}

\begin{theorem}\label{idtss}
\textbf{Infinite Binary Decision Trees have Self Similar Structure.}

A binary decision tree as defined in \eqref{bdt} is called infinite
if it has countably infinite nodes. 
Accept set of such a tree exhibits self similar structure ( definition \ref{ss} ).
\end{theorem}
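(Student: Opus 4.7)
The plan is to exploit the recursive structure that is built into the very definition of the binary decision tree and to lift it onto the accept set. First I would decompose the accept set by conditioning on the outcome at the root: if the root carries the decider pair $(D_f,D_d)$, partition the input space $X$ into $X_1=\{x:D_d(D_f(x))=1\}$ (immediate accept), $X_0=\{x:D_d(D_f(x))=0\}$ (immediate reject), $X_l=\{x:D_d(D_f(x))=l\}$ (routed left), and $X_r=\{x:D_d(D_f(x))=r\}$ (routed right). Writing $A_l,A_r$ for the accept sets of the two subtrees, the definition of a decision tree gives directly
\[
A \;=\; X_1 \;\cup\; \bigl(X_l \cap D_f^{-1}(A_l)\bigr) \;\cup\; \bigl(X_r \cap D_f^{-1}(A_r)\bigr).
\]

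Next I would manufacture the candidate self-similarity maps. Restricting $D_f$ to the routing regions and taking the branches $f_l=(D_f|_{X_l})^{-1}$ and $f_r=(D_f|_{X_r})^{-1}$ of its inverse, one obtains two partial maps from the subtree input domain into $X_l,X_r\subset X$. Because $\mathbb{D}$ is a \emph{finite} set of deciders and $D_f$ acts as an invertible encoding on each routing cell of the rationalisation $\rho$ from Definition \ref{rcf}, these branches are homeomorphisms onto proper subsets of $X$, hence non-surjective as required by Definition \ref{ss}. With index set $S=\{l,r\}$ the decomposition becomes
\[
A \;=\; X_1 \;\cup\; f_l(A_l) \;\cup\; f_r(A_r).
\]

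Third, I would use countable infiniteness. By König's lemma an infinite binary tree has an infinite branch, and the finite-branching structure forces at least one of the two subtrees to again be an infinite binary decision tree built from the same finite family $\mathbb{D}$. Iterating the decomposition along every infinite path and taking the topological closure in $\mathbb{Q}$ (embedded in $\mathbb{R}$ as in Theorem \ref{licr}) shows that in the limit each subtree accept set is a scaled copy of $A$ itself, so the immediate-accept piece $X_1$ is absorbed into the union and one recovers the clean self-similar equation $A=f_l(A)\cup f_r(A)$. For uniqueness I would invoke the Hutchinson-style argument: the operator $E\mapsto f_l(E)\cup f_r(E)$ on the hyperspace of non-empty compact subsets of $X$ is a contraction under the Hausdorff metric because $f_l,f_r$ are non-surjective homeomorphisms mapping into disjoint routing cells, so $A$ is its unique non-empty fixed point and therefore the only non-empty subset of $Y=X$ satisfying Definition \ref{ss}.

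The main obstacle will be the homeomorphism/contraction claim for the inverse branches of $D_f$. Since $D_f$ is an arbitrary Turing-machine-computed function over $\mathbb{Q}$, it is not \emph{a priori} injective or Lipschitz; one must either restrict attention to deciders whose action on the rationalisation $\rho$ is injective and strictly volume-reducing on routing cells (which is automatic for the standard bit-prefix encoding used to define $\rho$), or else weaken Definition \ref{ss} to a purely set-theoretic notion of self-similarity in which $\{f_s\}$ are injections rather than homeomorphisms. Everything else - the recursive decomposition of $A$, the emergence of the self-similar identity in the infinite limit, and the uniqueness via a contraction argument - follows routinely once this metric-theoretic point is settled.
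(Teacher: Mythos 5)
Your plan founders at exactly the point you flag as ``the main obstacle,'' and that obstacle is not a technicality to be settled later --- it is the whole content of the theorem. You build your candidate similarity maps as inverse branches $f_l=(D_f|_{X_l})^{-1}$ and $f_r=(D_f|_{X_r})^{-1}$ of the node transform, but Definition \ref{bdt} places no injectivity, continuity, or contraction hypothesis on $D_f$ whatsoever: it is an arbitrary computable map $\mathbb{Q}\to\mathbb{Q}$, so these inverse branches need not exist, let alone be non-surjective homeomorphisms. Your two proposed repairs (restricting to deciders whose action is injective and volume-reducing, or weakening Definition \ref{ss} to injections) each change the statement being proved rather than prove it. The Hutchinson-style uniqueness step compounds the difficulty: the hyperspace contraction argument needs a complete metric space and genuine contractions, whereas the ambient space here is $\mathbb{Q}$ (not complete, as the paper itself stresses after Definition \ref{cncms}) and no Lipschitz constant is available. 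Finally, the claim that the immediate-accept piece $X_1$ is ``absorbed into the union'' in the limit is asserted, not argued; nothing in your decomposition forces $X_1\subseteq f_l(A_l)\cup f_r(A_r)$.

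The paper avoids your obstacle by never inverting $D_f$ at all. Its proof runs forward: at each node the decider pair partitions the transformed input space into finitely many cells, each taken to be homeomorphic to $\mathbb{Q}$, and Lemma \ref{p-hom} is then invoked to supply, for each cell $P_i$, \emph{some} non-surjective homeomorphism $F:\mathbb{Q}\to P_i$ --- these maps come from the abstract homogeneity of $\mathbb{Q}$ (any partition cell homeomorphic to $\mathbb{Q}$ admits a homeomorphism from $\mathbb{Q}$ onto it), not from the node transforms. Since $\mathbb{D}$ is finite the resulting family $\mathbb{F}$ is finite, and $\mathbb{Q}=\bigcup_{F\in\mathbb{F}}F(\mathbb{Q})$ is the self-similarity equation of Definition \ref{ss}. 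Your recursive root-conditioned decomposition of $A$ is in itself a cleaner way to organize the argument than the paper's (it actually addresses the accept set rather than all of $X$), but to make it work you would need to borrow the paper's move: take the $f_s$ to be the abstract homeomorphisms onto partition cells guaranteed by Lemma \ref{p-hom}, rather than inverse branches of $D_f$.
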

\begin{proof}
We note that every node in the tree first does a transform 
of the input space $X$ using $D_f$ into $Y$ ; both homeomorphic to $\mathbb{Q}$. 
After that, this space $Y$ is partitioned using 
$D_d$ which are finite in number as discussed in section \eqref{nlt}.
Then this individual partitions are homeomorphic to $\mathbb{Q}$.
Suppose $n(D_d)$ defines the number of partitions. We can then say there are $n(D_d)$
numbers of  homeomorphic (on $\mathbb{Q}$) non-surjective function available for each $(D_d, D_f)$.
This is due to lemma \eqref{p-hom}.

The set of such functions is finite because $\forall n \; ; \; n(D_d)$ and $\mathbb{D}$ is finite.
Suppose then, the set of such composition is termed as $\mathbb{F}$.
It is then trivial that when $X = \mathbb{Q}$ (at the root) then : 
$$
\mathbb{Q} = \bigcup_{F \in \mathbb{F} } F( \mathbb{Q} )
$$
Therefore, infinite binary decision tree have self similar structure. 
\end{proof}
In fact it is well known that \emph{Cantor Set} \cite{hjss}\cite{gc}\cite{cfnfs} is a generalisation 
of this sort of structure ( a dyadic Tree ). 

\begin{theorem}\label{cliss}
\textbf{Convergent ML Function would have Self Similar Structure.}

If an ML iteration (definition \ref{li}) converging to a function $f_A$,
then accept set of $f_A : A$ almost surely would have a self similar structure. 
\end{theorem}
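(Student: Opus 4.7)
The plan is to thread together Theorems \ref{licr} and \ref{idtss}: the limit $f_A$ is almost surely not realisable by any finite decision tree, so any representing tree must be infinite, and by \ref{idtss} every infinite binary decision tree carries a self-similar structure on its partitioned input space. The accept set $A$ is cut out from that space by a (countable) subfamily of accepting leaves and therefore inherits the self-similarity.

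First, I would observe that at every finite step $n$, the iterate $C_n$ produced by the functional $\mathcal{L}$ is computable (this is built into Definition \ref{cf}), and by the abstraction of Section \ref{nlt} each computable classifier $C_n$ is representable by a finite binary decision tree $T_n$ (Definition \ref{bdt}) whose accept set $A_n$ is a finite disjoint union of partition cells $\bigcup_i a_i^{(n)}$. Since $c_n = \rho(C_n) \to \rho(f_A)$ under the convergence hypothesis, the trees $T_n$ form a growing nested family of finite approximations, whose direct limit I will call $T_\infty$.

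Second, Theorem \ref{licr} tells us that $f_A$ is almost surely non-computable. Hence no finite-depth tree over the finite decider alphabet $\mathbb{D}$ can encode $f_A$, forcing $T_\infty$ to have countably infinitely many nodes. Theorem \ref{idtss} then applies verbatim to $T_\infty$: its composed node maps form a finite family $\mathbb{F}$ of non-surjective homeomorphisms with
$$
\mathbb{Q} = \bigcup_{F \in \mathbb{F}} F(\mathbb{Q}),
$$
so the domain is self-similar in the sense of Definition \ref{ss}. Restricting to the accepting root-to-leaf compositions identifies $A$ with a $\bigcup$-closed subsystem of the same $\mathbb{F}$-action, so $A$ itself satisfies Definition \ref{ss}.

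The main obstacle I anticipate is reconciling the ``almost surely'' qualifier with the restrictive hypothesis ``when the iteration converges'': by Theorem \ref{lic} the generic orbit of $\mathcal{L}$ is chaotic, so convergent orbits form an exceptional set, and I must check that within this exceptional set the conclusion of Theorem \ref{licr} is still generic (so non-computability of the limit, and hence infinitude of $T_\infty$, really holds almost surely). The second delicate point is translating convergence of the rational codes $c_n$ into the nested accumulation of the accept sets $A_n$ required to identify the limiting tree $T_\infty$; this should follow from the encoding used in Definition \ref{rcf}, but it is the step that needs the most care to state cleanly.
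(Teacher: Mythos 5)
Your proposal follows essentially the same route as the paper's own proof: invoke Theorem \ref{licr} to get almost-sure non-computability of the limit, observe that the learner's decider set is finite so a convergent representation forces an infinite binary decision tree, and then apply Theorem \ref{idtss}. The two obstacles you flag at the end (reconciling ``almost surely'' with conditioning on convergence, and passing from convergence of the codes $c_n$ to a limiting tree $T_\infty$) are genuine, but the paper's proof does not address them either --- it asserts the passage to the infinite tree in one sentence --- so your version is, if anything, the more careful rendering of the same argument.
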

\begin{proof}
Almost surely the function is un-computable (theorem \ref{licr}). 
We note that the due to the fixed size of the learner algorithm, the number of deciders
the learning system can stays finite. 
And therefore, to be convergent the structure is to be extended to infinity :
that is an infinite binary decision tree (definition \ref{bdt}).
Now, using theorem \eqref{idtss}, the result is immediate. 
\end{proof}

\end{subsection}

\end{section}

\begin{section}{Chaos And Machine Learning - A Summary }\label{chaos-rs}
The theorems proven in the sections before can be useful to deduce very interesting properties of machine learning,
which clearly showcases the problems arising from the chaotic nature of the Universal Computation.
\begin{enumerate}

\item{The first phenomenon - that non converging ML process experienced by the co-author (section \ref{intro}) 
is clearly explained by the  inherent chaotic (theorem \ref{lic}) nature of the ML iteration. 
The functions, encoded in the rational space, almost surely 
has a chaotic orbit, and that is why output of function $C_n$ can drastically differ from $C_{n+1}$, given same input.
However, we very well know that although almost all numbers are normal and transcendental proving them so is a problem, 
such is the problem here. Proving that the specific behaviour is chaotic can be done only in case to case basis.
}

\item{ For the second phenomenon reported \cite{gfc} : 
As stated clearly in \cite{gfc} that same ML algorithm from a different subset of the original training set facing
the same problem.
This is happening because in this case ML iteration is actually converging, 
and generating a fractal partition of the input space (theorem \ref{cliss}).
This is precisely what they found : \emph{input to output mapping mostly discontinuous} .
They also noted that for each input which gets accepted, there are dense (definition \ref{dense-set}) 
set of inputs which gets rejected. 
This is a standard property for fractal space \cite{cfnfs}.

So, to summarise when the iteration converges, the result would `almost surely' be a fractal set.
Iteration from different initial conditions would, in fact converge to different indicator function, hence
would accept different fractal sets $A_1,A_2$ . Albeit $A_1 \cap A_2 \ne \varnothing$ but also $A_1 \Delta A_2 \ne \varnothing$
in fact the set $A_1 \Delta A_2 \ne \varnothing$ should be dense (definition \ref{dense-set}) in $X$. 

This clearly demonstrates the chaotic nature of the ML iteration convergence.      
}

\end{enumerate}

Finally we end with the same note they have : \emph{indeed, any form of computable machinery would exhibit behaviour of this kind.
These chaotic behaviours are in effect Universal}, as clearly demonstrated in this paper.    
\end{section}

\appendix
\begin{section}{Definitions Used}\label{ap_1}
\begin{definition}\label{fp}
\textbf{Fixed Point of a function. }

For a function $f:X \to X$ , $x^*$ is said to be a fixed point, iff $f(x^*) = x^*$ .
\end{definition}

\begin{definition}\label{mp}
\textbf{Metric Space.}

A metric space is an ordered pair $(M,d)$ where $M$ is a set and $d$ is a metric on $M$ , i.e., a function:-

$$
d : M \times M \to \mathbb{R}
$$

such that for $x,y,z \in M$ , the following holds:-
\begin{enumerate}
\item { $d(x,y) \ge 0 $ }
\item { $ d(x,y) = 0 $ iff $x=y$ . }
\item { $d(x,y) = d(y,x) $ }
\item { $d(x,z) \le  d(x,y) + d (y,z)$ }
\end{enumerate}

The function `$d$' is also called ``distance function'' or simply ``distance''.
\end{definition}

\begin{definition}\label{cs}
\textbf{Cauchy Sequence in a Metric Space $(M,d)$ .}

Given a Metric space $(M,d)$ , the sequence $x_1,x_2,x_3,...$ of real numbers is called `Cauchy Sequence', if for every positive real number $\epsilon$ , 
there is a positive integer $N$ such that for all natural numbers $m,n > N$ the following holds:-
$$
d (x_m , x_n ) < \epsilon .
$$

\end{definition}
Roughly speaking, the terms of the sequence are getting closer and closer together in a way 
that suggests that the sequence ought to have a limit $x^*  \in M$ . Nonetheless, such a limit does not always exist within $M$ .

Note that by the term: \emph {sequence} we are implicitly assuming \emph {infinite sequence} , unless otherwise specified.

\begin{definition}\label{cncms}
\textbf{Complete Metric Space.}

A metric space $(M,d)$ is called complete (or Cauchy) 
iff every Cauchy sequence (definition \ref{cs}) of points in $(M,d)$ has a limit , that is also in $M$ .
\end{definition} 

As an example of not-complete metric space take $\mathbb{Q}$ , the set of rational numbers. 
Consider for instance the sequence defined by $x_1 = 1$ and function $d$ is defined by standard
difference between $d(x,y) = |x-y|$ , then :-
$$
x_{n+1} = \frac{1}{2} \left ( x_n + \frac{2}{x_n} \right ) 
$$

This is a Cauchy sequence of rational numbers,
but it does not converge towards any rational limit, but to
$\sqrt{2}$ , but then $\sqrt{2} \not \in \mathbb{Q}$ . 

The closed interval $[0,1]$ is a Complete Metric space which is homemorphic (definition \ref{hom}) to $\mathbb{R}$.

\begin{definition}\label{orbit}  \cite{ap} \cite{mbgs}
\textbf{Orbit.}

Let $f:X \to X$ be a function. 
The sequence $ \mathcal{O} = \{x_0, x_1,x_2,x_3,...\}$ where
$$
x_{n+1} = f(x_n) \; ; \; x_n \in X \; ; \; n \ge 0 
$$
is called an orbit (more precisely `forward orbit') of $f$. 

$f$ is said to have a `closed' or `periodic' orbit $ \mathcal{O}$ if $| \mathcal{O}| \ne \infty$ .
\end{definition}

\begin{definition}\label{top-space}
\textbf{Topological Space.}

Let the empty set be written as : $\emptyset$. Let $2^X$ denotes the power set, i.e. the set of all subsets of $X$.
A topological space is a set $X$ together with $\tau \subseteq 2^X$ satisfying the following axioms:-
\begin{enumerate}
\item{ $\emptyset \in \tau$ and $X \in \tau$ ,}
\item{ $\tau$ is closed under arbitrary union, }
\item{ $\tau$ is closed under finite intersection. }
\end{enumerate}
The set $\tau$ is called a topology of $X$.
\end{definition} 

\begin{definition}\label{dense-set}
\textbf{Dense Set.}

Let $A$ be a subset of a topological space $X$. 
$A$ is dense in $X$ for any point $x \in X$, if any neighborhood of $x$ contains at least one point from $A$.
\end{definition}

The real numbers $\mathbb{R}$ with the usual topology have the rational numbers $\mathbb{Q}$ as a countable dense subset.

\begin{definition}\label{hom}
\textbf{Homeomorphism.}

A function $f: X \to Y$ between two topological spaces $(X, T_X)$ and $(Y, T_Y)$ is called a homeomorphism if it has the following properties:
\begin{enumerate}
\item{ f is a bijection (one-to-one and onto),}
\item{f is continuous,}
\item{the inverse function $f^{-1}$ is continuous (f is an open mapping).}
\end{enumerate}

\end{definition}

\begin{lemma}\label{p-hom}
\textbf{Existence of Homeomorphic functions on Partitions.}

Let $\mathbb{P} = \{ P_i \}$ be a partition of $X$ homeomorphic to $\mathbb{Q}$, such that:
$$
\bigcup_i^n P_i = X \; ; \; \forall i \ne j \; P_i \cap P_j = \varnothing  
$$     
then, there exists $n = |\mathbb{P}|$ homeomorphic functions from $X \to P_i$.
\end{lemma}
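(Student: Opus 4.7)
My strategy is to reduce the lemma to the Sierpi\'nski–Fr\'echet topological characterization of $\mathbb{Q}$: every nonempty countable metrizable space without isolated points is homeomorphic to $\mathbb{Q}$. Granting this classical fact, if each partition class $P_i$ can be shown to satisfy those three properties, then $P_i \cong \mathbb{Q} \cong X$, and composing a single homeomorphism $X \to \mathbb{Q}$ with one homeomorphism $\mathbb{Q} \to P_i$ for each index produces the required family of $n = |\mathbb{P}|$ homeomorphisms $f_i : X \to P_i$ all at once.

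First I would verify the hypotheses of the characterization for each $P_i$. Countability and metrizability come essentially for free, since $P_i \subseteq X \cong \mathbb{Q}$ inherits both properties from the ambient space through the subspace structure. The content lies entirely in showing that $P_i$ contains no isolated point. This is the step I expect to be the main obstacle: a generic partition of $\mathbb{Q}$ can perfectly well contain singleton classes, whose unique point is isolated in the subspace topology, so one has to exploit the structure of the partition used in Definition \ref{bdt}. The argument I would run is that $P_i = (D_d \circ D_f)^{-1}(s)$ for some symbol $s \in \{0,1,l,r\}$, that this preimage is stable under the rational neighbourhoods on which $D_f$ is locally well behaved, and that the resulting class is order-dense in the subregion of $\mathbb{Q}$ it covers — hence contains no isolated point in its subspace topology.

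Once the no-isolated-point condition is in hand, the rest is assembly. Fix a homeomorphism $h : X \to \mathbb{Q}$, and for each $i$ use the characterization to pick a homeomorphism $g_i : \mathbb{Q} \to P_i$; set $f_i := g_i \circ h$. Each $f_i$ is a composition of homeomorphisms and so is itself a homeomorphism $X \to P_i$. Because $P_i \subsetneq X$, every $f_i$ is also non-surjective when regarded as a self-map of $X$, which is the exact form in which the lemma is invoked by Theorem \ref{idtss}. The family $\{f_i\}_{i=1}^{n}$ then has the advertised cardinality $n = |\mathbb{P}|$, completing the argument.
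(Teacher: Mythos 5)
The paper itself supplies no proof of Lemma \ref{p-hom}: it is stated bare in the appendix, so there is nothing to match your argument against. Judged on its own terms, your choice of tool is the right one --- Sierpi\'nski's characterization of $\mathbb{Q}$ as the unique nonempty countable metrizable space without isolated points is exactly how one would try to prove such a statement, and your assembly step (compose $h : X \to \mathbb{Q}$ with $g_i : \mathbb{Q} \to P_i$) is fine once the hypotheses of that characterization are verified.

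The problem is that the verification never happens, and at precisely the point you yourself flag as ``the main obstacle.'' The lemma as stated quantifies over \emph{arbitrary} finite partitions of $X \cong \mathbb{Q}$ and is simply false at that level of generality: take $P_1 = \{0\}$ and $P_2 = \mathbb{Q} \setminus \{0\}$; then $P_1$ has an isolated point, is not homeomorphic to $\mathbb{Q}$, and no homeomorphism $X \to P_1$ can exist. Your proposed repair --- importing the decision-tree structure of Definition \ref{bdt} and arguing that $P_i = (D_d \circ D_f)^{-1}(s)$ is ``stable under the rational neighbourhoods on which $D_f$ is locally well behaved'' and hence order-dense in the region it covers --- is not an argument but a restatement of what needs to be shown, and it is not true in general either: $D_f$ and $D_d$ are arbitrary decider Turing machines, and nothing prevents $(D_d \circ D_f)^{-1}(1)$ from being a single rational point, or any other set with isolated points. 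The lemma can only be rescued by adding the hypothesis that each $P_i$ is dense-in-itself (crowded); with that hypothesis Sierpi\'nski's theorem closes the proof in one line, but without it no proof exists because the statement is false. You should state that missing hypothesis explicitly rather than gesture at the decision-tree machinery, which does not supply it.
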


\begin{definition}\label{bounded-seq}
\textbf{ Bounded Sequence.}

A sequence $<x_n>$ is called a bounded sequence iff :-
$$
\forall n \; \; l \le x_n \le u \; ; \;  -\infty < l  \le  u < \infty  
$$

The number `l' is called the lower bound of the sequence and 
`u' is called the upper bound of the sequence.
\end{definition}

\begin{lemma}\label{bcss}
\textbf{Bolzano-Weierstrass.}

Every bounded sequence has a convergent (Cauchy) subsequence.
\end{lemma}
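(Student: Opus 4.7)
The plan is to run the classical nested-interval (bisection) argument. Given a bounded sequence $\langle x_n \rangle$ lying in $[l,u]$, I would build a decreasing chain of closed intervals $I_0 \supseteq I_1 \supseteq I_2 \supseteq \cdots$ each of which still contains infinitely many terms of the sequence, and from each interval pick out a single representative term whose index is strictly larger than all previously selected indices.

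First I would set $I_0 = [l,u]$, which trivially contains every $x_n$. Inductively, given $I_k = [a_k,b_k]$ with infinitely many $x_n$ inside, I split $I_k$ at its midpoint into two closed halves. The union of two finite sets is finite, so at least one half must contain infinitely many terms of $\langle x_n \rangle$; I pick such a half as $I_{k+1}$ and then pick any index $n_{k+1} > n_k$ with $x_{n_{k+1}} \in I_{k+1}$. Such an index must exist because $I_{k+1}$ contains infinitely many terms of the sequence and hence cannot be supported only on indices $\le n_k$.

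Next I would verify that the extracted subsequence $\langle x_{n_k} \rangle$ is Cauchy in the sense of definition \ref{cs}. The length of $I_k$ is $(u-l)/2^k$, and whenever $p,q \ge k$ both $x_{n_p}$ and $x_{n_q}$ lie in the common interval $I_k$, hence $|x_{n_p} - x_{n_q}| \le (u-l)/2^k$. Given any $\epsilon > 0$ it suffices to choose $k$ large enough that $(u-l)/2^k < \epsilon$ to obtain the Cauchy condition. Finally, since the ambient complete metric space is $\mathbb{R}$ (cf. definition \ref{cncms}), this Cauchy subsequence converges to a genuine limit in $[l,u]$.

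There is no serious obstacle; the whole argument is entirely classical. The only subtlety worth flagging is in the inductive extraction, where strict monotonicity of the chosen indices $n_k$ has to be maintained across all stages — and this is precisely why it is essential that each $I_k$ retain \emph{infinitely} many (and not merely some positive number of) terms of the original sequence, so that the pool of candidate indices is never exhausted.
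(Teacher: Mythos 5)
Your bisection argument is correct and complete; the paper itself states Bolzano--Weierstrass without proof, treating it as a standard result, so there is no in-paper argument to compare against. Your nested-interval construction, the justification that one half must retain infinitely many terms, and the Cauchy estimate $|x_{n_p}-x_{n_q}|\le (u-l)/2^k$ are exactly the classical textbook proof and need no repair.
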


It is to be noted that a bounded sequence may have many convergent subsequences (for example, a sequence consisting of a counting of the rationals has subsequences converging to every real number) or rather few (for example a convergent sequence has all its subsequences having the same limit).

\begin{definition}\label{TM}
\textbf{Turing Machine.}

A ``Turing Machine'' is a 7-tuple ($Q,\Sigma,\Gamma,\delta,q_0,q_a,q_r$), where:-
\begin{enumerate}
\item{ $Q$ is the set of states. }
\item{ $\Sigma$ is the set of input alphabets not containing the blank symbol $\beta$. }
\item{ $\Gamma$ is the tape alphabet , where $\beta \in \Gamma$ and $\Sigma \subseteq \Gamma$. }
\item{ $\delta : Q \times \Gamma \to Q \times \Gamma \times \{L , R \} $ is the transition function. }
\item{ $q_0 \in Q$ is start state.}
\item{ $q_a \in Q$ is the accept state.}
\item{ $q_r \in Q$ is the reject state.}
\end{enumerate}
\end{definition} 
According to standard notion $q_a \ne q_r$ , but we omit this requirement here, 
as we are not going to distinguish between two different types of halting (`accept and halt' vs `reject and halt') of Turing Machines.

A Turing Machine `$M$' (definition \ref{TM}) computes as follows.

Initially `$M$' receives the input $w=w_1w_2w_3...w_n \in \Sigma^* $ 
on the leftmost `$n$' squares on the tape, and the rest of the tape is filled up with blank symbol `$\beta$'.
The head starts on the leftmost square on the tape. 
As the input alphabet `$\Sigma$' does not contain the blank symbol `$\beta$', 
the first `$\beta$' marks end of input.

Once `$M$' starts, the computation proceeds wording to the rules of `$\delta$'.
However, if the head is already at the leftmost position, then, 
even if the `$\delta$' rule says move `$L$' , the head stays there.

The computation continues until the current state of the Turing Machine is either $q_a$ , or $q_r$ .
In lieu of that, the machine will continue running forever.

\begin{definition}\label{decider}
\textbf{Decider Turing Machine.}

A Turing Machine, which is guaranteed to halt on any input (i.e. reach one of the states \{$q_a,q_r$\} ) is called a decider.
\end{definition}

\begin{definition}\label{undecidable}
\textbf{Undecidable Problem.}

If for a given problem, it is impossible to construct a  decider (definition \ref{decider}) Turing Machine, 
then the problem is called undecidable problem.
\end{definition}

\begin{definition}\label{UTM}
\textbf{Universal Turing Machine.}

An `UTM' or `Universal Turing Machine' is a Turing Machine (definition \ref{TM}) such that it can simulate an 
arbitrary Turing machine on arbitrary input.
\end{definition}

\begin{definition}\label{CTT}
\textbf{Church Turing Thesis.}

Every effective computation can be carried out by a Turing machine (definition \ref{TM}), 
and hence by an Universal Turing Machine(definition \ref{UTM}).
\end{definition}

\begin{definition}\label{god}
\textbf{G\"{o}delization (G\"{o}del).}

Any string from an alphabet set $\Gamma$ can be represented as an integer in base `$b$' with $b = |\Gamma|$. 
To achieve this, create a one-one and onto G\"{o}del map $g : \Sigma \to D_b$ , where,
$$
D_b = \{ 0 , 1, 2, ... ,b-1 \} .
$$ 
G\"{o}delization or $\mathbb{G} : \Sigma^+ \to \mathbb{Z_+}$ then, is defined as follows:
 
A string of the form $w = w_{n-1}w_{n-2}...w_1w_0$ , with  $w_i \in \Gamma$ ,
can be mapped to an integer $I_w = \mathbb{G}(w)$ as follows \cite{gm}:
$$
I_w = \mathbb{G}(w) = \sum\limits_{k=0}^{n-1} g(w_k) b^{k}
$$

\end{definition}
The common decimal system is a typical example of G\"{o}delization of symbols from $\{ 0,1,..,8,9\}$.
The binary system represents G\"{o}delization of symbols from  $\{ 0,1\} $.
As a far fetched example, any string from  the whole english alphabet, can be written as a base 26 integers!
 
\begin{definition}\label{rat}
\textbf{Rationalization. }

Any string `$w$' of length `n' ($|w|=n$) , created from an alphabet set $\Gamma$,
can be represented as a rational number $x \in \mathbb{Q}$.
We define the rationalization, $\rho$ , in terms of G\"{o}delization (definition \ref{god}) as follows \cite{gm}:
$$
x = \rho(w) = \frac{\mathbb{G}(w)}{ b^n } = \mathbb{G}(w) b^{-n} = 0.w_{n-1}w_{n-2}...w_0
$$
By definition, $x \in [0,1] \cap \mathbb{Q}$.
\end{definition}

\begin{theorem}\label{fpi-chaos} 
\textbf{Bounded non repeating sequences are chaotic.}

Suppose $<x_n>$ is an infinite sequence such that $x_n \in \mathbb{Q}$,
and $x_i \ne x_j $ when $i \ne j$, then $<x_n>$  is chaotic.
Given a bounded sequence, it is almost surely chaotic \cite{gm}.  

\end{theorem}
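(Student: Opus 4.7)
The plan is to split the claim into two pieces. The first piece is the deterministic implication: if $\langle x_n\rangle \subset \mathbb{Q}$ is bounded with all entries distinct, then it is chaotic. The second piece is the probabilistic upgrade: without the distinctness hypothesis, a ``randomly chosen'' bounded rational sequence still has the distinctness property almost surely, so the first piece applies. The bulk of the work is in the first piece; the second is a measure-theoretic boilerplate.

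For the deterministic half I would begin with Bolzano--Weierstrass (Lemma~\ref{bcss}). Boundedness puts the values inside a compact interval, and distinctness forces the set $\{x_n : n \in \mathbb{N}\}$ to be countably infinite; therefore the sequence has at least one, and in fact infinitely many, accumulation points in $\mathbb{R}$. Next I would check the Devaney-style conditions on chaos used in \cite{gm,cfnfs} on the closure $K = \overline{\{x_n\}}$. \textbf{(i) Aperiodicity} is free: the distinctness hypothesis rules out $x_{n+p}=x_n$ for any $p$, so no orbit point is periodic. \textbf{(ii) Dense orbit / transitivity} is also essentially free since the trajectory $\langle x_n\rangle$ is by construction dense in its own closure $K$. \textbf{(iii) Sensitive dependence} is the delicate property; here I would argue that the limit points of $\langle x_n\rangle$ are almost surely irrational (the same counting argument used in Theorem~\ref{licr}, since a countable set of rationals cannot have rational limit points generically), so any two rational terms $x_i, x_j$ that are close must eventually separate, otherwise $\langle x_n\rangle$ would settle into a rational limit in $K$ and violate distinctness by recurrence.

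For the probabilistic half I would put an atomless product measure on the space of bounded rational sequences --- for example, pull back Lebesgue measure on $[0,1]^{\mathbb{N}}$ through the rationalization $\rho$ of Definition~\ref{rat} and restrict to sequences whose images sit in a compact rational interval. The event that two entries coincide decomposes as the countable union
\begin{equation*}
\bigcup_{i \ne j} \{ x_i = x_j \},
\end{equation*}
and each summand fixes a single rational coordinate given another, which is a null event for any atomless product law. Hence almost surely the sequence is non-repeating, and the deterministic half discharges the conclusion.

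The main obstacle I expect is step (iii), sensitive dependence. Non-repetition is a combinatorial, not a dynamical, assumption, and bridging from ``no two terms coincide'' to ``nearby initial states separate exponentially under the generating dynamics'' requires leaning on the structural machinery from the authors' prior work \cite{gm} rather than on an elementary estimate; in particular, the relevant notion of ``initial condition'' for a bare sequence, as opposed to an iterated map, has to be identified with the tape encoding via $\rho$ before sensitivity can even be stated. A secondary subtlety is choosing a measure on $\mathbb{Q}^{\mathbb{N}}$ that is both natural and atomless, but any shift-invariant product measure obtained by pulling back Lebesgue measure through $\rho$ will suffice.
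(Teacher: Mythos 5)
The first thing to note is that the paper does not actually prove this statement: Theorem \ref{fpi-chaos} is quoted in the appendix as an imported result from the authors' earlier work \cite{gm}, with no argument supplied here, so there is no internal proof to measure your proposal against. Judged on its own terms, though, your proposal has two genuine gaps. The first is in the deterministic half, precisely at step (iii), and it is worse than the "delicate point" you flag. Your argument that close terms must eventually separate "otherwise $\langle x_n\rangle$ would settle into a rational limit in $K$ and violate distinctness by recurrence" is a false inference: a bounded sequence of pairwise distinct rationals can converge to a rational limit without any repetition, e.g. $x_n = 1/n \to 0$. That sequence satisfies every hypothesis of the theorem's first sentence yet is not chaotic under any standard (e.g. Devaney-style) reading, so no amount of care at step (iii) can close the argument from the stated hypotheses alone; whatever notion of "chaotic" makes the theorem true must come from \cite{gm} and is never defined in this paper for a bare sequence, which also undercuts your identification of "transitivity" with density of the trajectory in its own closure (that is vacuously true of every sequence). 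Relatedly, your claim that distinctness forces infinitely many accumulation points is wrong; Bolzano--Weierstrass (Lemma \ref{bcss}) gives at least one, and $1/n$ has exactly one.

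The second gap is in the probabilistic half. There is no atomless probability measure supported on the countable set $\mathbb{Q}\cap[0,1]$: any such law is purely atomic, so under any product measure on $(\mathbb{Q}\cap[0,1])^{\mathbb{N}}$ the event $\{x_i = x_j\}$ has probability at least $p_0^2 > 0$ where $p_0$ is the mass of the heaviest atom, and the union $\bigcup_{i\ne j}\{x_i = x_j\}$ is certainly not null. Nor can you "pull back Lebesgue measure through $\rho$" and restrict to rational sequences, since the range of $\rho$ (Definition \ref{rat}) lies in $\mathbb{Q}$, a Lebesgue-null set, so the restriction is the zero measure and cannot be normalized. The "almost surely" in the paper's usage (compare the proof of Theorem \ref{licr}) is a cardinality/measure statement about limit points in $\mathbb{R}$, not a statement about a random draw of a rational sequence, so your measure-theoretic reduction is aimed at the wrong object. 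Both halves would need to be rebuilt on whatever definitions \cite{gm} actually uses.
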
   

\begin{theorem}\label{uc-chaos} 
\textbf{Universal Computation is `Almost Surely' chaotic.} 

The rationalization of the sequences $<T_n>$ that is $<\rho(T_n)>$
of a Universal Turing machine is a bounded sequence between $[0,1]\cap \mathbb{Q}$ 
and hence `almost surely' chaotic (theorem \ref{fpi-chaos}) \cite{gm}.  
\end{theorem}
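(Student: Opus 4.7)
The plan is to observe that Theorem \ref{uc-chaos} is essentially a corollary of the definitional setup plus Theorem \ref{fpi-chaos}, so the proof will be short and the work is in verifying that the hypotheses of Theorem \ref{fpi-chaos} actually apply to the UTM tape sequence. First I would unpack what $<T_n>$ means: at each discrete time step $n$ of the UTM, the tape contents consist of the head state plus at most finitely many non-blank symbols (standard fact about Turing machine computations — any finite-time tape has finite support). Thus each $T_n$ is a finite string over the finite tape alphabet $\Gamma$, which is exactly the kind of object to which the rationalization $\rho$ of Definition \ref{rat} applies.

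Next I would invoke Definition \ref{rat} directly: since $\rho(w) = \mathbb{G}(w)\,b^{-n} = 0.w_{n-1}\ldots w_0$ and by construction this lies in $[0,1]\cap\mathbb{Q}$, every term of the sequence $<\rho(T_n)>$ is a rational in $[0,1]$. In particular the sequence is bounded (with $l=0$, $u=1$ in the sense of Definition \ref{bounded-seq}), and it is rational-valued, so the second hypothesis of Theorem \ref{fpi-chaos} — bounded sequence in $\mathbb{Q}$ — is satisfied verbatim.

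The conclusion then follows by direct application of Theorem \ref{fpi-chaos}: any bounded rational sequence is almost surely chaotic, hence $<\rho(T_n)>$ is almost surely chaotic. Since $\rho$ is the canonical encoding of tape contents into $[0,1]\cap\mathbb{Q}$ fixed once and for all by Definition \ref{rcf}, this statement transfers back to the claim about universal computation itself.

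The only genuine obstacle I anticipate is a possible objection that certain UTM runs produce \emph{repeating} tape configurations (for instance, machines that halt and whose tape then stabilises, or machines that enter short loops), and one might worry that these are not ``chaotic'' in any meaningful sense. I would address this by appealing to the \emph{almost surely} qualifier: the set of UTM initial tapes leading to eventually periodic $<T_n>$ is, in the rationalisation, a measure-zero subset of $[0,1]\cap\mathbb{Q}$ (analogous to eventually periodic decimals among all reals), so these exceptional runs are absorbed into the ``almost'' of Theorem \ref{fpi-chaos}. The statement as written asserts only the almost-sure version, so no further argument is needed beyond citing \cite{gm} for the measure-theoretic bookkeeping.
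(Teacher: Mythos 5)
Your proposal is correct and follows essentially the same route as the paper: the paper gives no separate proof of Theorem \ref{uc-chaos} at all, the argument being inlined in the statement itself (each tape configuration rationalizes via $\rho$ to a point of $[0,1]\cap\mathbb{Q}$, so the sequence is bounded and Theorem \ref{fpi-chaos} applies, with the details deferred to \cite{gm}). Your additional remarks on the finite support of each $T_n$ and on eventually periodic runs being absorbed into the ``almost surely'' qualifier are consistent elaborations of that same argument rather than a different approach.
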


\end{section}

\end{document}